\documentclass[letterpaper]{article} 
\usepackage{aaai23}  
\usepackage{times}  
\usepackage{helvet}  
\usepackage{courier}  
\usepackage[hyphens]{url}  
\usepackage{graphicx} 
\urlstyle{rm} 
\usepackage{natbib}  
\usepackage{caption} 
\frenchspacing  
\setlength{\pdfpagewidth}{8.5in}  
\setlength{\pdfpageheight}{11in}  
%
\usepackage{algorithm}
\usepackage{algorithmic}

\usepackage{tikz,pgfplots}

%
\usepackage{newfloat}
\usepackage{listings}
\DeclareCaptionStyle{ruled}{labelfont=normalfont,labelsep=colon,strut=off} 
\lstset{%
	basicstyle={\footnotesize\ttfamily},
	numbers=left,numberstyle=\footnotesize,xleftmargin=2em,
	aboveskip=0pt,belowskip=0pt,%
	showstringspaces=false,tabsize=2,breaklines=true}
\floatstyle{ruled}
\newfloat{listing}{tb}{lst}{}
\floatname{listing}{Listing}
%
\pdfinfo{
/TemplateVersion (2023.1)
}

\setcounter{secnumdepth}{2} 

%

\title{City-scale Pollution Aware Traffic Routing\\ by Sampling Max Flows using MCMC}
\author {
    Shreevignesh Suriyanarayanan, 
    Praveen Paruchuri,  
    Girish Varma 
}
\affiliations {
    Machine Learning Lab, IIIT Hyderabad\\
    sshreevignesh.s@research.iiit.ac.in, praveen.p@iiit.ac.in, girish.varma@iiit.ac.in
}

\usepackage{amsmath}
\usepackage{amssymb}
\usepackage{amsfonts}
\usepackage{booktabs}
\usepackage{caption} 
\usepackage{graphicx}
\usepackage{mathtools}
\usepackage{amsthm}
\usepackage{newfloat}
\usepackage{tikz}
\usepackage{array,multirow,graphicx}


\begin{document}
\theoremstyle{plain}
\newtheorem{theorem}{Theorem}
\newtheorem*{proposition}{Proposition}
\newtheorem{lemma}[theorem]{Lemma}
\newtheorem{corollary}[theorem]{Corollary}
\theoremstyle{definition}
\newtheorem*{definition}{Definition}

\newtheorem{assumption}[theorem]{Assumption}
\theoremstyle{remark}
\newtheorem{remark}[theorem]{Remark}

\maketitle

\begin{abstract}A significant cause of air pollution in urban areas worldwide is the high volume of road traffic. Long-term exposure to severe pollution can cause serious health issues. One approach towards tackling this problem is to design a pollution-aware traffic routing policy that balances multiple objectives of i) avoiding extreme pollution in any area ii) enabling short transit times, and iii) making effective use of the road capacities. We propose a novel sampling-based approach for this problem. We provide the first construction of a \emph{Markov Chain} that can sample integer max flow solutions of a planar graph, with theoretical guarantees that the probabilities depend on the aggregate transit length. We designed a traffic policy using diverse samples and simulated traffic on real-world road maps using the SUMO traffic simulator. We observe a considerable decrease in areas with severe pollution when experimented with maps of large cities across the world compared to other approaches. 
\end{abstract}

\section{Introduction}
Long-term exposure to high amounts of air pollution causes various health issues \cite{pope2006health}. Data from WHO \cite{world2016ambient} shows that $91\%$ of the world's population lives in places where the pollution levels exceed the guideline limits. Outdoor air pollution accounts for an estimated $4.2$ million deaths per year, primarily due to stroke, heart disease, lung cancer, and chronic respiratory diseases. Low and middle-income countries suffer the most, especially in the Western Pacific and South-East Asia regions. Road traffic is considered one of the most significant contributors to air pollution in urban environments \cite{gualtieri2015statistical}. Studies have shown that people are willing to choose greener routes when credible information is provided \cite{AHMED2020101965}.

\begin{figure}[t]
  \centering
  
    \resizebox{0.8\columnwidth}{!}{
\includegraphics{./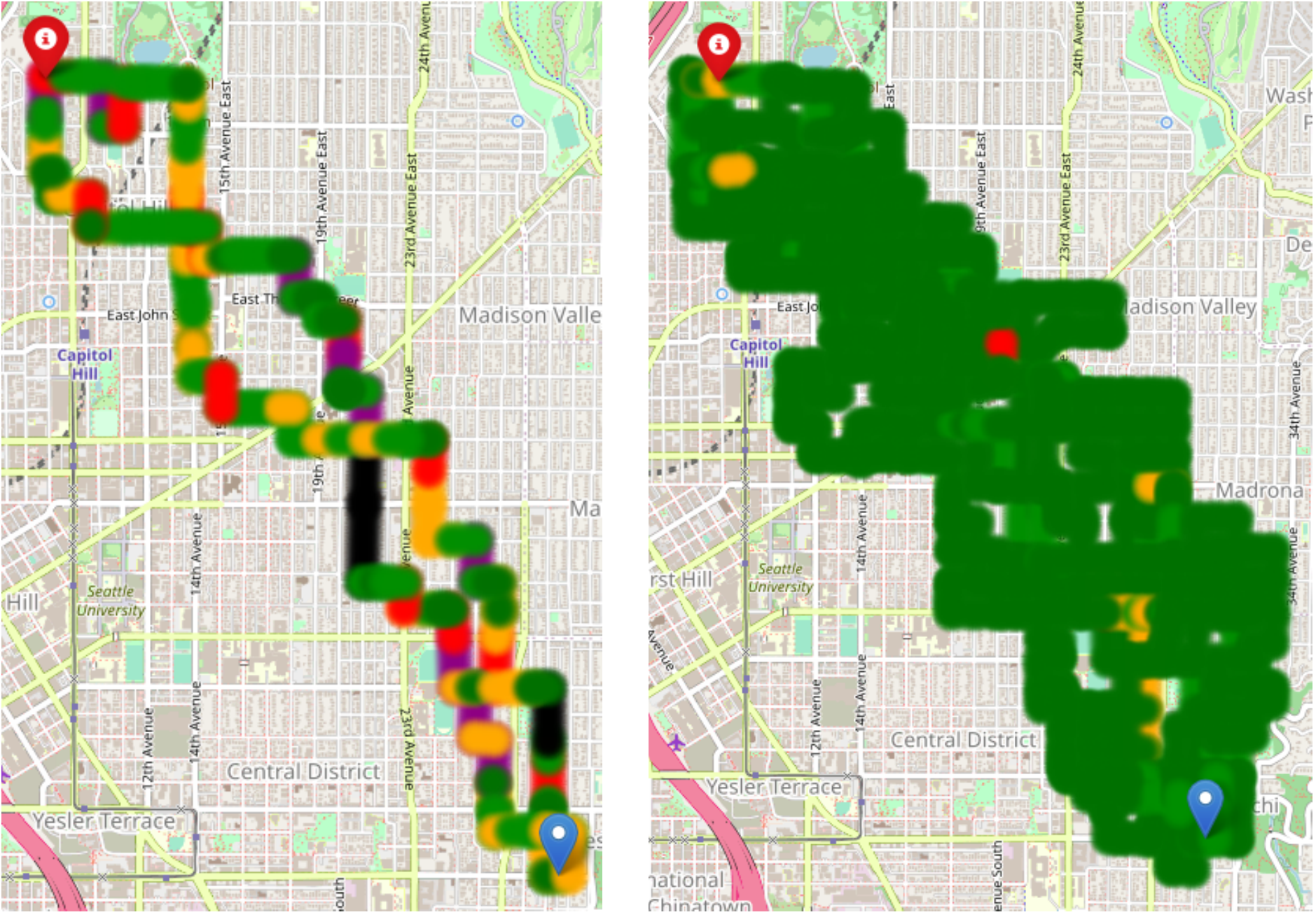}}
\resizebox{\columnwidth}{!}{
\includegraphics{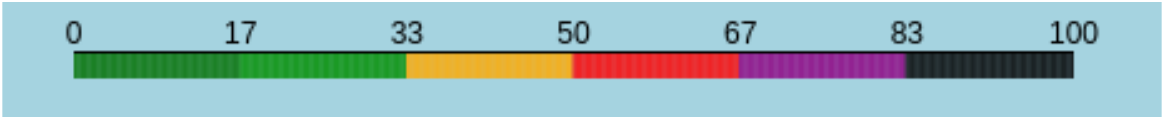}}
    \caption{Pollution released on road links for traffic simulation on a large area of Seattle, USA. Colour legend given at bottom is the percentage of the max pollution. Left is obtained using only a single flow (\emph{FFA}) and right is using our \emph{MaxFlow-MCMC} which uses multiple diverse max flow solutions. As can be seen severe pollution (red, purple, black) is prevented.}
    \label{fig:front-pic}
\end{figure}%



\citet{KAMISHETTY2020102194} propose the development of a transportation policy that distributes the traffic flow more evenly through a city to reduce the concentration of pollution in specific pockets. They use the concept of $k$-optimality, which ensures that any two flows have at most $\leq k$ edges in common, and design a traffic policy using multiple \emph{k-optimal maximum flow solutions} to route traffic differently on different days or for reasonable timelines. Here, \emph{k-optimality} is a measure \cite{pearce2007quality} that is used to capture distinctness between solutions while \emph{maximum flow} solutions \cite{ford1956maximal} are generated to maintain the throughput of traffic network. However, existing work in this space can only be used for small areas due to computational scalability issues. 
\begin{figure*}
  \centering
  \resizebox{0.8\textwidth}{!}{
  \includegraphics{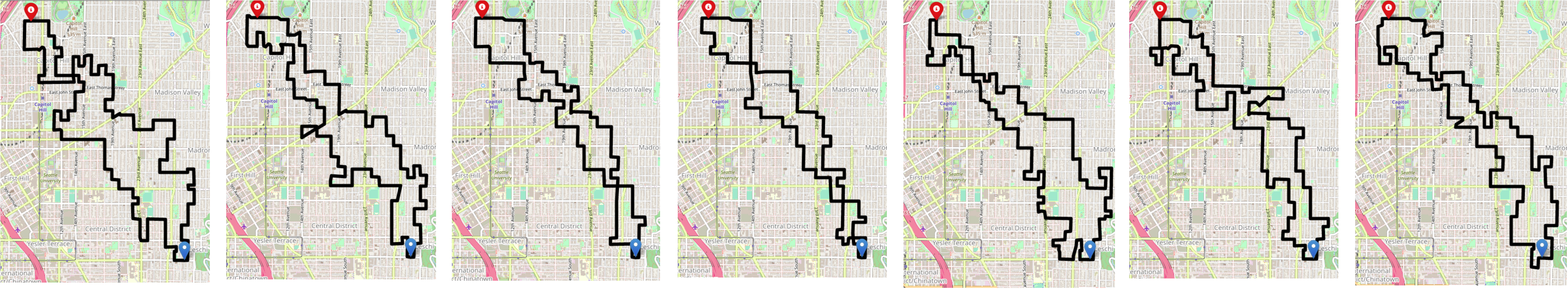}}
  \caption{Seven diverse max flows samples from the \emph{MaxFlow-MCMC} algorithm. Traffic is routed according to these and simulated in \emph{SUMO} for equal intervals, resulting in the emission heatmap shown in Figure \ref{fig:front-pic}.}
  \label{fig:Individual_sols}
\end{figure*}
We build upon this line of work to develop a significantly faster solution for pollution aware traffic routing using a \emph{Markov Chain Monte Carlo (\emph{MCMC})} \cite{Diaconis,Hastings, Rosenbluth, Bubley} based method to sample integer max flow solutions from a planar graph and generate a k-Optimal set of max flow solutions (hence named \emph{MaxFlow-MCMC}). Our \emph{Markov chain} extends the algorithm for sampling paths in planar graphs \cite{MontanariPenna}, and we provide proof of convergence to the stationary distribution, which assigns a higher probability to max flow solutions with shorter aggregate path length. 

To compare our algorithm with the previous work, we simulated the algorithm on real-world road networks of multiple cities. We use \emph{Simulation of Urban Mobility (SUMO)} \cite{SUMO2018}, which is a traffic simulator in combination with \emph{OpenStreetMap (OSM)} \cite{haklay2008openstreetmap}, an open-source tool that helps to model traffic settings on a real-world map. We also use the emission modeling capability of SUMO to evaluate the pollution levels generated by the different solution approaches, namely \emph{MaxFlow-MCMC}, \emph{k-optimal Pareto Max Flow Algorithm (k-PMFA)} \cite{KAMISHETTY2020102194} and \emph{Ford-Fulkerson Algorithm (FFA)} \cite{ford1956maximal}. Our results show that \emph{MaxFlow-MCMC} performs as well as the \emph{k-PMFA} algorithm in terms of pollution severities while being scalable to large-scale cities. In particular, we obtained a $79\%$ decrease in normalized mean pollution compared to \emph{FFA} on a larger map corresponding to Seattle (with $18699$ edges).\\

\noindent
\textbf{Our Contributions.} 
\begin{enumerate}
    \item We design a \emph{Markov Chain} that can be used to sample integer max flow solutions from a planar graph with probabilities proportional to an exponential of the length of the solution (see Section \ref{sec:MCMC}).
    \item We use the \emph{MCMC} method to obtain a set of diverse max flows (k-optimal) and use it to give the first approach to designing a large-scale transport policy that avoids severe pollution while maintaining reasonable transit times. Traffic routing using such a set of max flows ensures that i) a diverse set of paths are used, which prevents concentration of pollution ii) vehicles are routed through shorter paths, and iii) capacity of the road network is fully utilized as it is a max flow (see Section \ref{sec:traffic_policy}).
    \item We test our \emph{MaxFlow-MCMC} traffic policies using the \emph{SUMO} traffic simulator on real-world maps.The performance was compared with \emph{k-PMFA} and \emph{FFA} algorithms. We also demonstrate the scalability of \emph{MaxFlow-MCMC} on large real-world road maps (see Section \ref{sec:results}).
\end{enumerate}

\section{Related Works}

\begin{figure*}[t]
    \centering

    \resizebox{0.82\textwidth}{!}{
    \includegraphics{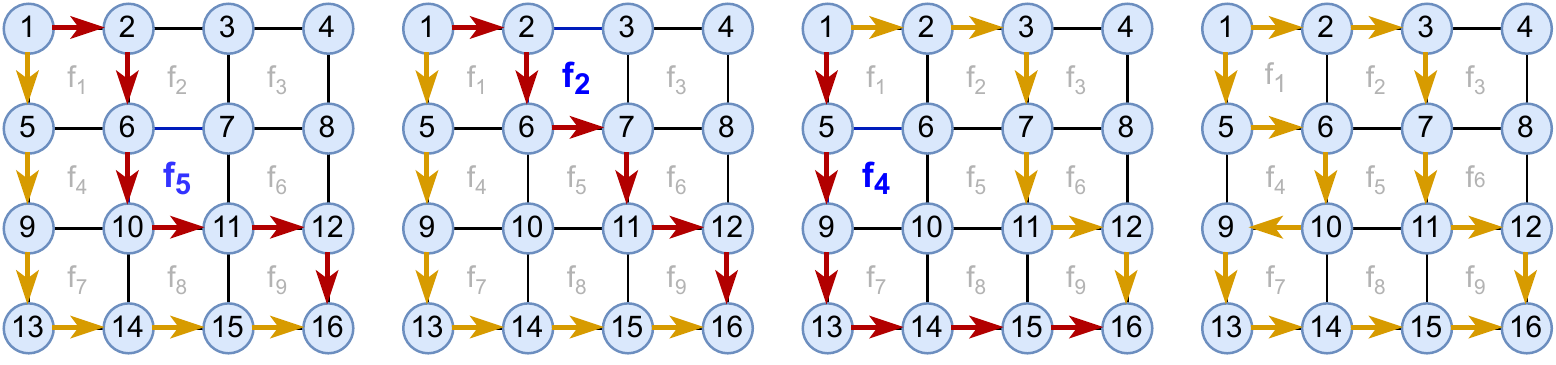}}
    \caption{Example of a transition in the \emph{Markov chain}. The underlying road network has 16 junctions forming a grid with max flow value from $1$ to $16$ to be $2$. The state space consist of all max flows from $1$ to $16$, which can also be considered as a pair of $1 - 16$ paths (marked using arrows). The transition in $M_{\text{flow}}$ given in Algorithm \ref{algo:mflow_step}, involves choosing one of the paths $p$ (marked in red) and one of the faces $f \in \{f_1,\cdots f_{10} \}$ (marked as blue) of the planar graph at random and rerouting the path along the face. The faces $\{f_1\cdots f_{9}\}$ are marked in the graph and $f_{10}$ is the outer face.
    }
    \label{fig:transitions}
\end{figure*}

\paragraph{Maximum Flow Problem.}
The \emph{Ford-Fulkerson} algorithm \cite{ford1956maximal}, is a popular algorithm to compute the maximum flow between two given points in a network, including traffic networks \cite{schrijver2002history}. 
However, it provides us with only a single max flow solution. We aim to develop an algorithm that will provide multiple max flow solutions between two points.

\paragraph{Pollution aware routing.}
There has been some work on the topic of pollution aware routing \cite{ALAM201882}, \cite{boriboonsomsin2012eco}. However, most of the previous work is focused on minimising the exhaust emissions and fuel consumption of a vehicle. In recent times, Google Maps has introduced eco-friendly routing, which suggests a path in a similar manner \cite{3newways27:online}. Our work focuses on improving the health of residents who live near frequently used roads by reducing their long term exposure to high levels of pollution i.e., we do not explicitly aim to reduce the pollution but aim to distribute the pollution better.



\paragraph{k-opt Pareto Max Flow Algorithm (k-PMFA).}
The \emph{k-Opt Pareto Max Flow} Algorithm \cite{KAMISHETTY2020102194}, computes a set of k-Optimal max flow solutions where each max flow solution has at most $k$ common edges with every other max flow solution. One of the steps in the algorithm involves computing all the simple paths from the source to the destination. A simple path between two nodes is defined as a path where no node appears more than once \cite{easley2012networks}. Counting all the simple paths from a source to a destination was proved to be \#P complete \cite{valiant1979complexity}, which implies that finding all the simple paths would be an NP-hard problem. Our algorithm generates a k-optimal set without requiring us to find and store all the simple paths, and hence it will be more efficient in terms of time and memory requirements.

\paragraph{MCMC Method for sampling paths in a planar graph.}
There is extensive literature on using \emph{MCMC} method for generating combinatorial structures \cite{JVV1986}. Typically such generation gives a uniform distribution over structures. In our case, we need to sample from a non-uniform distribution \cite{MartinRandall}. 
\citet{MontanariPenna} designs an \emph{MCMC} method for sampling simple paths in a planar graph. We construct a \emph{Markov Chain} over integer Max Flows of a graph, with specific guarantees on stationery distribution. Our approach for sampling max flow solutions from a planar graph is inspired by them but requires additional proofs for showing the irreducibility of the \emph{Markov chain}, which ensures that capacity constraints are not violated.

\section{Sampling Integer Max Flows using MCMC}
\label{sec:MCMC}

We consider the road network to be a planar graph $G(V,E)$. Edges $E$ of the graph will be the roads, while nodes $V$ will be the junctions where roads intersect or represent the end of roads. Each road in the network has a length and a specific number of lanes used as the edge's capacity since one vehicle can travel through each lane at a particular time. Let $s$ be the source node from which the vehicles travel to the destination node $t$. We want to ensure that the maximum number of vehicles can travel from $s$ to $t$ at any time and hence represent this as a maximum flow problem.

Our approach starts with an initial integer max flow solution and makes small random modifications to it, resulting in a sample from a distribution over integer max flow solutions. This method is commonly known as \emph{Monte Carlo Markov Chain (MCMC)} method. The \emph{Ford Fulkerson Algorithm} can find the initial max flow or set of paths. The distribution will result in higher probabilities for max flows whose aggregate length of paths is shorter.

 Since we are designing a system to route vehicles, we focus on integer max flow solutions where the flow through each edge is a non-negative integer. The \emph{Integrality Theorem} \cite{erickson1999algorithms}, which is a corollary of the \emph{Ford-Fulkerson Algorithm}, states that there exists at least one integer max flow solution in a graph where all the edge capacities are non-negative integers. Hence, we will have at least one integer max flow solution for a road network.

\subsection{A Markov Chain for Integer Max Flow}
Let us represent an integer max flow solution as a set of $mf$ number of paths, where $mf$ is the maximum flow value from $s$ to $t$, and each path contains a flow of $1$. Note that if an edge has a flow value $f$, it will be part of exactly $f$ paths. We define a \emph{Markov Chain} $M_{\text{flow}}$, whose state space $\Omega$ is the set of all integer max flow solutions. We define the total length of a max flow solution $x$ denoted $|x|$ as the sum of the lengths of all the paths in it. The length of each path will be the sum of the lengths of all the edges in the path. We can reroute a path of the max flow solution using a face if they have one or more common edges, as shown in Figure \ref{fig:transitions}. These transitions are inspired by \cite{MontanariPenna}, who built a similar \emph{Markov Chain} for sampling paths. The transitions rules of $M_{\text{flow}}$ are given in Algorithm \ref{algo:mflow_step}. 

The algorithm has a hyperparameter $\lambda$ that decides the preference given to the total length of a state. If $\lambda < 1$, transitions to a state with a lesser total length will be more probable. Similarly, if $\lambda > 1$, transitions to a state with a higher total length will be more probable. When $\lambda=1$, 
the transition to all neighbouring paths will have the same probability. 


\subsection{Convergence of the Markov Chain}
We show that the \emph{Markov chain} $M_{\text{flow}}$ converges to a stationary distribution which has the property that the probability of max flow $x$ will be proportional to $\lambda^{|x|}$. Setting $\lambda <1$, allows us to sample max flows with shorter aggregate lengths. First, we show that the distribution with the above property is a stationary distribution.


\begin{lemma}\label{lem:stationery}
\emph{A stationary distribution of $M_{\text{flow}}$ is
$$ \pi(x) = \frac{\lambda^{|x|}}{Z} \qquad \text{ where } \qquad Z = \sum_{y \in \Omega}\lambda^{|y|}$$
}
\end{lemma}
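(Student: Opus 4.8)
The plan is to prove that $\pi$ is stationary by establishing the stronger property of \emph{detailed balance}: $\pi(x)\,P(x,y) = \pi(y)\,P(y,x)$ for every pair of states $x,y \in \Omega$, where $P$ is the transition kernel of $M_{\text{flow}}$ given by Algorithm \ref{algo:mflow_step}. Once detailed balance is in hand, stationarity is immediate, since $\sum_{x}\pi(x)P(x,y) = \sum_x \pi(y)P(y,x) = \pi(y)\sum_x P(y,x) = \pi(y)$; so there is no need to argue anything about irreducibility or uniqueness here.

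First I would unpack a single transition. A step of $M_{\text{flow}}$ picks one of the $mf$ paths $p$ in the current max flow $x$ uniformly at random, picks one of the faces $f$ of the planar graph $G$ uniformly at random (there are a fixed number $|F|$ of faces, independent of the state $x$), and, when $p$ and the boundary $\partial f$ share at least one edge, proposes the new state $y$ obtained by rerouting $p$ along $f$ (replacing the shared portion, i.e.\ taking the symmetric difference with $\partial f$). The key combinatorial fact, inherited from \cite{MontanariPenna}, is that this reroute operation is an \emph{involution}: rerouting the modified path in $y$ along the \emph{same} face $f$ returns $p$, hence returns $x$. Therefore the number of $(\text{path},\text{face})$ pairs that carry $x$ to $y$ equals the number that carry $y$ to $x$, and since each such pair is drawn with the same probability $\tfrac{1}{mf\,|F|}$, the \emph{proposal} probabilities are symmetric: $q(x,y) = q(y,x)$.

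Next I would fold in the acceptance rule. Algorithm \ref{algo:mflow_step} accepts a proposed move $x\to y$ with a Metropolis-type probability proportional to $\min\{1,\lambda^{|y|-|x|}\}$ (so that shorter aggregate length is favored when $\lambda<1$). Substituting into detailed balance and using $\pi(x)/\pi(y) = \lambda^{|x|-|y|}$, the identity to verify per pair is $\lambda^{|x|}\min\{1,\lambda^{|y|-|x|}\} = \lambda^{|y|}\min\{1,\lambda^{|x|-|y|}\}$; a one-line case check (e.g.\ both sides equal $\lambda^{|y|}$ when $|y|\ge|x|$ and $\lambda<1$, and the other cases are symmetric) shows this holds. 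Summing this termwise identity over all $(\text{path},\text{face})$ pairs realizing the move $x\to y$, together with the self-loop contribution (which satisfies detailed balance trivially), yields $\pi(x)P(x,y) = \pi(y)P(y,x)$, and hence the lemma.

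I expect the main obstacle to be the careful bookkeeping in the involution/symmetry step rather than the Metropolis algebra: one must check that the rerouted object is still a legitimate $s$–$t$ path, that $y$ is a genuine integer max flow of the same value $mf$ (so that the state space is preserved and the $\tfrac{1}{mf\,|F|}$ factor is state-independent), and that when the rerouted path happens to duplicate another path already present in $x$ the multiplicity counts of $(\text{path},\text{face})$ pairs match on both sides of the exchange. The orthogonal question of whether the move set actually connects all integer max flows without ever violating capacity constraints (irreducibility) is not needed for this lemma and is addressed separately.
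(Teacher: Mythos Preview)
Your proposal is correct and follows essentially the same route as the paper: verify detailed balance by noting that the $(\text{path},\text{face})$ proposal is symmetric and that the Metropolis acceptance ratio $\min\{1,\lambda^{|y|-|x|}\}$ satisfies $\lambda^{|x|}\min\{1,\lambda^{|y|-|x|}\}=\lambda^{|y|}\min\{1,\lambda^{|x|-|y|}\}$. The paper's argument is a bit terser (it simply writes out $\pi(x)P(x,y)$ and $\pi(y)P(y,x)$ with the explicit factors $\tfrac{1}{|F|}\cdot\tfrac{1}{mf}$ and does the two-case check), whereas you spell out the involution and flag the multiplicity bookkeeping as a potential subtlety; but the substance is identical.
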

\begin{proof}
The proof uses standard arguments and is provided in appendix. 
\end{proof}





\begin{algorithm}[t]
    \caption{$M_{\text{flow}}(x)$ defines a step of the \emph{Markov Chain}  on current state $x$. See Figure \ref{fig:transitions}, for an example of its run. }
    \label{algo:mflow_step}
    \begin{algorithmic}[1] 
    \STATE $\mbox{faces} =$ set of all faces in the planar graph
    \STATE $mf$ = the current integer max flow
    \STATE $\mbox{paths} =$ set of paths in $x$ 
    \STATE $b \gets \mbox{Uniform}(\{0,1\})$
    \IF {$b == 1$}
    \STATE f $\gets$ Uniform(faces), p $\gets$ Uniform(paths)\;
    \IF {$f,p$ do not share an edge \textbf{or} rerouting $p$ through $f$ violates capacity}
    \STATE \textbf{return} $x$
    \ELSE
    \STATE $y \gets \mbox{reroute}(p,f)$
    \STATE \textbf{return} $y$ with probability $\min\{1,\frac{\lambda^{|y|}}{\lambda^{|x|}}\}$  where $|x|$ and $|y|$ are the total lengths of $x$ and $y$. 
    \STATE \textbf{return} $x$
    \ENDIF
    \ENDIF
    \end{algorithmic}
    \end{algorithm}


Next, we show that starting from any state, $M_{\text{flow}}$ converges to $\pi$, by showing that it is Ergodic. An Ergodic \emph{Markov Chain} will have a unique stationary distribution and will converge to it \cite{durrett2019probability}. For Ergodicity, we need to show that it is i.) aperiodic and ii.) irreducible.

\noindent
\textbf{Aperiodicity.} A \emph{Markov chain} is said to be aperiodic if 
$$ \forall x \in \Omega, \mbox{gcd}\{t \in \mathbb{N} | P ^{t} (x, x) > 0\} = 1.$$
where $P^t$ is the $t$ step transition probabilities.
For $M_{\text{flow}}$, we defined transitions in such a way that there is a self-loop with a probability of at least $1/2$. Therefore, $M_{\text{flow}}$ is aperiodic. 

\noindent
\textbf{Irreducibility.} A \emph{Markov chain} is said to be irreducible if
$$ \forall x, y \in \Omega, \exists  t \in N \mid P_{t}(x, y) > 0.$$

That is, every state in state space can be reached from every other
state with a finite number of steps.  It is shown in \citet{MontanariPenna} that
    we can reach any simple s-t path from any other path by rerouting through the faces of the planar graph.
    However, in our case, we need to ensure that the capacity constraints are not violated.  We show this using the following definition and proposition. 


\begin{definition}[Outer paths of an Integer Flow]
We assume that the line joining $s$ and $t$ to be the direction of positive $x$ axis. We define the \emph{outer path} of an integer flow in a planar graph as a path between $s$ and $t$ with a non-zero flow, such that there is no other path in the flow with a non-zero flow in between the path and the outer face of the graph. We define the \emph{top outer path} as the outer path that contains the node with the highest y-coordinate and the \emph{bottom outer path} as the one that contains the node with the least y-coordinate. 
\end{definition}

\begin{proposition}
We can reach any state in $M_{\text{flow}}$ from any other state in a finite number of steps.
\end{proposition}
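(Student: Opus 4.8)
The plan is to reduce irreducibility of $M_{\text{flow}}$ to the known result of \citet{MontanariPenna} for single paths, while carefully handling the capacity constraints that are new to our setting. The key idea is to define a canonical ``sorted'' max flow state and show that (a) every state can be driven to this canonical state by a finite sequence of legal transitions, and (b) every transition is reversible, so the canonical state can also reach every state; combining these gives a path between any two states. Since an integer max flow is a multiset of $mf$ many $s$-$t$ paths, I would try to canonicalize by repeatedly pushing the top outer path ``upward'' (toward the outer face) and the bottom outer path ``downward'' as far as the planar embedding allows, then recursing on the flow with these two outer paths removed. This produces a uniquely determined nested/laminar family of paths that serves as the canonical representative.

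First I would establish the reversibility claim: if $y = \mathrm{reroute}(p, f)$ is reachable from $x$ in one step, then $x = \mathrm{reroute}(p', f)$ is reachable from $y$ in one step using the same face $f$, and both transition probabilities are positive (this uses the Metropolis acceptance ratio $\min\{1, \lambda^{|y|}/\lambda^{|x|}\}$, which is strictly positive whenever the move is proposed, together with the uniform choice of face and path each having positive probability). Second, I would show the capacity-respecting reachability of the canonical state: given any max flow $x$, I want to argue that if $x$ is not yet canonical, there is some path $p$ in $x$ and some face $f$ sharing an edge with $p$ such that rerouting $p$ along $f$ both (i) does not violate capacity and (ii) strictly decreases some potential function — e.g. a lexicographic potential based on the area enclosed between consecutive paths, or the total ``height'' of the outer paths. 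The \emph{outer path} definition from the paper is exactly the tool here: an outer path has nothing between it and the outer face, so rerouting it outward along an incident face never collides with another path and hence never violates capacity.

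The main obstacle I expect is point (ii) above — proving that such a capacity-safe, potential-decreasing move always exists when $x$ is non-canonical. The subtlety is that in \citet{MontanariPenna} a single path can be rerouted through any incident face freely, but here an ``inner'' path may be blocked on both sides by other paths carrying flow, so the naive move would violate capacity. I would handle this by always operating on an outer path of the current flow (top or bottom), showing that outer paths can always be moved toward the outer face without capacity conflicts, and that once an outer path is maximally pushed out it can be ``frozen'' and deleted, reducing to a strictly smaller max flow on a subgraph where the argument recurses. A secondary technical point is ensuring the potential function is genuinely bounded and strictly monotone under these moves so that the canonicalization terminates in finitely many steps; a discrete area/height argument over the finite planar embedding should suffice. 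Finally, I would assemble the pieces: $x \to \text{canonical} \to y$ via reversibility, concluding $P_t(x,y) > 0$ for some finite $t$, which together with aperiodicity gives Ergodicity and hence convergence to $\pi$.
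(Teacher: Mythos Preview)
Your proposal is correct and rests on the same key observation as the paper: the top (or bottom) outer path of an integer max flow can always be rerouted toward the outer face without violating capacity, because by definition no other flow-carrying path lies between it and the outer face. The paper organizes the induction slightly differently: rather than driving every state to a single canonical configuration, it directly matches the top outer paths $\textit{op}_x$ and $\textit{op}_y$ of the two given states to each other by listing their common nodes $s, v_1, \ldots, v_r, t$ and, on each of the $r{+}1$ subpaths between consecutive common nodes, raising the lower of the two subpaths until it coincides with the higher one; it then strips this now-common unit path from both $x$ and $y$ and recurses on the residual flows $x', y'$. Your canonical-state route is equivalent in strength but obliges you to define the canonical ``maximally outward'' path precisely and argue its uniqueness, whereas the paper's pairwise matching sidesteps that at the cost of carrying two states through the induction simultaneously. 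Both arguments need only the top outer path at each stage; your additional use of the bottom outer path is harmless but unnecessary.
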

\begin{proof}
    Let $x$ and $y$ be any two states in $\Omega$ and $\textit{paths}(x), \textit{paths}(y)$ be the set of $mf$ paths in $x, y$ respectively. Let $\textit{op}_x \in \textit{paths}(x)$ and $\textit{op}_y \in \textit{paths}(y)$ be the top outer paths of $x$ and $y$ respectively. We transform $\textit{op}_x$ and $\textit{op}_y$ to the same $s$ - $t$ path using a sequence of reversible $M_{\text{flow}}$ transitions.Let $s,v_1 \cdots, v_r,t$ be common nodes of $\textit{op}_x$ and $\textit{op}_y$. For each of the $r+1$ subpaths between these nodes, we convert the subpath in the lower one of $\textit{op}_x,\textit{op}_y$ to the other. This is possible without violating capacity constraints since there is no flow above the top outer paths by definition. Then we consider the residual flow $x', y'$ given by removing the unit flow through these paths from $x,y$. Then we repeat the same process on $x',y'$, until the residual flow becomes $0$. A diagrammatic example of the proof is given in the appendix for clarity.

\end{proof}



Since our \emph{Markov chain} is both irreducible and aperiodic, it is ergodic and will have a unique stationary distribution to which it will converge irrespective of the initial state. As proved in Lemma \ref{lem:stationery}, in the stationary distribution, the probability of a max flow solution $x$ is proportional to $\lambda^{|x|}$. 


\section{Traffic Routing using MaxFlow-MCMC}
\label{sec:traffic_policy}

We propose to use the \emph{MaxFlow-MCMC} algorithm (see Algorithm \ref{alg:mf-mcmc}) to generate a k-Optimal max flow solution set, which can be used for routing vehicles in such a way that the pollution is evenly spread out. Even though our chain is not rapidly mixing, we can start sampling before the chain has completely mixed as we do not need the solutions to follow the specific distribution for our use case since we are just sampling to generate a k-optimal set of max flow solutions. We define the k-optimality of our solution set similar to the definition of k-similarity in \citet{barrett2008engineering}. A set of max flow solutions is k-optimal when any two solutions from the set have at most $k$ common edges.

We use the \emph{Ford-Fulkerson Algorithm} \cite{ford1956maximal} to find one max flow solution for the starting state. We initially let the \emph{Markov chain} run for a few iterations without sampling to ensure that the initial state is random when we start sampling. We sample a random max flow solution from the current distribution every few iterations and check if it is k-optimal with the solution set. If it is, we append it to the solution set. We keep repeating this until we reach the exit condition. The \emph{MaxFlow-MCMC} Algorithm has multiple parameters we can modify to suit our time and solution constraints, making it scalable for larger maps.

\subsection{Implementing Policy in Simulation and Real World}

There are multiple ways by which the \emph{MaxFlow-MCMC} algorithm can be implemented in real world scenarios: (a) As described in \citet{KAMISHETTY2020102194}, traffic police can use a different solution to route the traffic on different days. For example, by identifying a set of 7 solutions, traffic can be routed using one solution per day of the week. A solution can be implemented by restricting the set of paths available for a vehicle to take and the number of vehicles on each path but does not restrict a vehicle's choice among the allowed ones. (b) GPS software(s) like Google maps can use to suggest different set of paths to different users on different days \cite{3newways27:online}. (c) Autonomous cars can be routed using our method, where vehicles can be instructed to use different paths.

\subsection{Traffic Simulation using SUMO}
\emph{Simulation of Urban MObility (SUMO)} is an open source, microscopic and continuous traffic simulation package designed to handle large networks. SUMO comes with a large set of tools for scenario creation and simulation and can handle multiple aspects of traffic flow generation, including computation of acceleration, deceleration, emission modeling, congestion, the distance between vehicles, etc., resulting in realistic modeling.

\begin{algorithm}[t]
    
    \begin{algorithmic}[1] 
    \STATE $x \gets$ FFA(s,t), solutionset $\gets \emptyset$
    \WHILE{iter $\leq$ \texttt{num\_iter} + \texttt{mix\_iter}}
    \IF {iter $>$ \texttt{mix\_iter}  and iter\% $sf$ == 0}
    \IF {KOpt($x$,solutionset)}
    \STATE solutionset $\gets$ solutionset + $x$
    \ENDIF
    \ENDIF
    \STATE x $\leftarrow M_{\text{flow}}(x)$ (See Algorithm 1) 
    \STATE iter $\gets$ iter+1
    \ENDWHILE
    \STATE \textbf{return} solutionset
    \end{algorithmic}
    \caption{ $x$ denotes current state of the \emph{Markov chain}, \emph{FFA} denotes Ford Fulkerson Algorithm, \texttt{mix\_iter} denotes number of iterations for mixing, \texttt{num\_iter} denotes number of iterations for sampling and $sf$ denotes sampling frequency.}
    \label{alg:mf-mcmc}
    \end{algorithm}
    
\subsection{Parameters for MaxFlow-MCMC}

\textbf{Sampling Frequency}: The number of iterations after which we sample a max flow solution from the current distribution. If we want to increase the number of solutions in the solution set, we can sample the solutions more frequently, and if the aim is to improve the runtime of the algorithm, we can sample less frequently.\\
\noindent
\textbf{Exit Condition}: The User can decide the exit condition of the algorithm based on requirements. The algorithm can be run for a fixed number of iterations, or it can be run until a solution set of the required size is obtained.\\
\noindent
\textbf{Lambda($\lambda$)}: Affects the total length of the max flow solutions. A lesser value of lambda would mean that the solutions obtained will typically have a lesser total path length. In comparison, a higher value of lambda would mean that the solutions will also include paths of higher length (which can allow sampling a large number of solutions). \\
\noindent
\textbf{Value of k}: Increasing the Value of $k$ would decrease the time required to obtain a set of solutions of the same size, but the set would have more common edges, leading to a lesser spread of pollution in general.

\subsection{Complexity Analysis of MaxFlow-MCMC}

\paragraph{Time Complexity.} The initial \emph{FFA} step is of complexity $O(|E| * mf)$, where $|E|$ is the number of edges and $mf$ is the max flow. Calculating the faces needs $O(kV)$ on average, and $O(V^2)$ in the worst-case \cite{schneider2019finding}. At every step of the algorithm, the following computations need to be done: 
(a) Randomly choosing a face needs $O(|F|) = O(|E|)$ since $|F|+|E| -|V| = 2$ by Euler's formula \cite{trudeau2013introduction}, where $|F|$ and $|V|$ are the number of faces and nodes respectively. 
(b) Checking for max flow condition needs reasoning over the number of edges in the path, i.e., $O(|E|)$. 
(c) Rerouting involves $O(|E|)$. 
(d) Checking for k-Optimality needs to be done every $sf$ steps needing $O(\frac{num\_sol*|E|^2}{sf})$. 

The overall worst-case time complexity would therefore be $O(\frac{num\_iter  * num\_sol * |E|^2}{sf} + |V|^2)$. Therefore, worst case runtime of \emph{MaxFlow-MCMC} will be polynomial in $|E|$ and $|V|$, provided that $num\_iter$ and $num\_sol$ are not very high both of which can be tuned per need in the algorithm. Note that the runtime for \emph{MaxFlow-MCMC} does not include the time required for calculating the faces of the graph since it was done as a prepossessing step. \citet{schneider2019finding} proves that even this step can be computed with an average complexity of $O(k|V|)$ and therefore will not make a significant change to the overall runtime.

\paragraph{Space Complexity.} Following are the key steps involved in the algorithm, which require: (a) A memory of $O(|E|^2)$ for storing all the faces. (b) A memory of $num\_sol * mf *|E|)$ for storing the solution set. This leads to a overall worst-case space complexity of $O((num\_sol*mf*|E| + |E|^2)$. In summary, our algorithm has a time and space complexity that is tractable in the number of edges and, therefore, scalable to larger networks.

\section{Experimental Details}
\label{sec:expt-det}

\begin{table}[t]
    \centering
    \begin{tabular}{lrrr}
    \toprule
        Name & $\lambda$ & $num\_iter$ & $sf$  \\
        \midrule
        MaxFlow-MCMC1 & 0.95 & 50000 & 25 \\
        MaxFlow-MCMC2 & 0.95 & 25000 & 25 \\
        MaxFlow-MCMC3 & 0.90 & 50000 & 25 \\
        MaxFlow-MCMC4 & 0.95 & 50000 & 50 \\
        MaxFlow-MCMCLarge & 0.99 & until 7 sol & 25 \\
        MaxFlow-MCMCKanpur & 0.95 & until 7 sol & 25 \\
    \bottomrule    
    \end{tabular}
    
    
    \caption{Parameter values used for experiments. The first four are used for ablation studies and the last two are for large scale experiments.} 
    \label{tab:maxlfowmcmcproperties}
    
\end{table}

The parameters for experiments on large scale graphs named \emph{MaxFlow-MCMC Large}, are presented in Table \ref{tab:maxlfowmcmcproperties}. Since the \emph{MaxFlow-MCMC} method is applicable only for planar graphs, we modified the real-world network to make it planar by removing the non-planar components such as bridges. Since there can be cases of multiple sources and destinations for vehicles, we performed experiments with more than one source and destination also. We introduce a virtual source and a virtual destination node, connected to all the source and sink nodes respectively, using infinite capacity edges \cite{6108162}.
Algorithms were then used with vehicles traveling from the virtual source to the virtual sink. 


We tested \emph{MaxFlow-MCMC} for different values of $k$ and simulated traffic using the \emph{SUMO} simulator \cite{SUMO2018}. We used Sage Math \cite{sagemath} to find the faces of the roadmap. Vehicles were released in waves with an interval of $5$ seconds each for the small map and an interval of $15$ seconds each for the larger map. They are released in waves so that a reasonable distance is created before the next wave of vehicles is allowed into the simulation. The number of vehicles per wave was equal to the maximum flow from the source to the destination. We increased the interval for the large map since it involved roads with different speed limits. Vehicles also have to slow down at junctions for turning, and a larger path can potentially involve a larger number of turns. We use $NOx$ values (i.e., $NO$ and $NO_{2}$ values) to measure the pollution due to their high dependency on traffic flow and their adverse impact on human health \cite{tomas2013analysis}. The plots were generated using matplotlib \cite{Hunter:2007}.

Due to the randomness involved in our algorithm, different runs using the same parameters may provide us with a different number of solutions. Hence, we use each solution of the generated k-optimal set for $1$ hour and compute \emph{Normalised Mean pollution} ($NOx_{nm}$) to compare the different runs of the algorithm accurately. 
$$NOx_{nm} = \frac{ \textit{Mean amount of NOx released per hour} }{ \textit{Total number of edges with non-zero emissions} }$$

\emph{SUMO} provides us with the amount of NOx released for every edge with non-zero emissions over the simulation period. We calculate $NOx_{nm}$ by taking the mean pollution over all the edges and dividing it by the total number of solutions we used, as we use one solution per hour. $NOx_{nm}$ helps us understand how well the pollution is distributed over an area. This is due to the insight that to decrease $NOx_{nm}$, we need to decrease the mean amount of $NOx$ released, or we have to increase the number of edges being used. Since we are releasing the same number of vehicles per hour for all the simulations, $NO_x{nm}$ will be highly dependent on the pollution spread.

For \emph{MaxFlow-MCMC}, we set $\lambda$ close to 1 since we measure the lengths of paths in meters. Given that the probability of our \emph{Markov chain} moving from one state to another depends on the difference in their length, and if we set lambda to a smaller value like $0.5$, the probability of moving to a state that is just $5$ meters longer would be close to $0.03$.

\begin{table}[t]
    \centering

    \resizebox{\columnwidth}{!}{
    \begin{tabular}{lrrrrrr}
    \toprule
        Map name & Sq. km & Edges & Nodes & $s,t$ Pairs & $k$ \\
        \midrule
        Small map & $0.03$ & 37 & 25 & 1 & 9\\
        Seattle & $25.16$ & 18699 & 14939 & 2 & 170 \\
        Berkeley & $82.90$ & 30808 & 24864  & 1 & 440\\
        Kanpur & $18.50$ & 29956 & 20707 & 1 & 350\\
        Islington & $14.90$ & 5382 & 2367 & 2 & 300\\
        
    \bottomrule    
    \end{tabular}}
    \caption{ Details of the maps used for the experiments. We use a small map to compare with \emph{k-PMFA} since it is infeasible to run for larger maps.} 
    \label{tab:map_details}
\end{table}

For comparison with previous work on \emph{k-PMFA} \citet{KAMISHETTY2020102194}, we needed to make some modifications since the algorithm was designed for directed graphs. Hence, it can become inefficient to distribute the pollution in road networks containing lanes in both directions. This is because solutions generated by it can include flow in both directions of the road as part of two different paths, e.g., a max flow solution can contain one path that has a flow of $1$ from node $p$ to node $q$, whereas another path can have a flow of $1$ from node $q$ to node $p$. This will lead to increased pollution with no effective increase in capacity, which can be avoided. Hence, we extend the \emph{k-PMFA} algorithm to undirected graphs by using \emph{Breadth-First Search (BFS)} on the flow graph of the max flow solutions to generate the individual paths. This can help avoid the above situation while retaining the max flow solutions where the flows do not cancel each other. 

Runtime results presented were measured on an Intel E5-2640 processor on an HP SL230s compute node. All the results were averaged over $30$ runs.

\begin{figure}[t]   \centering

    \resizebox{0.9\columnwidth}{!}{
    \includegraphics{./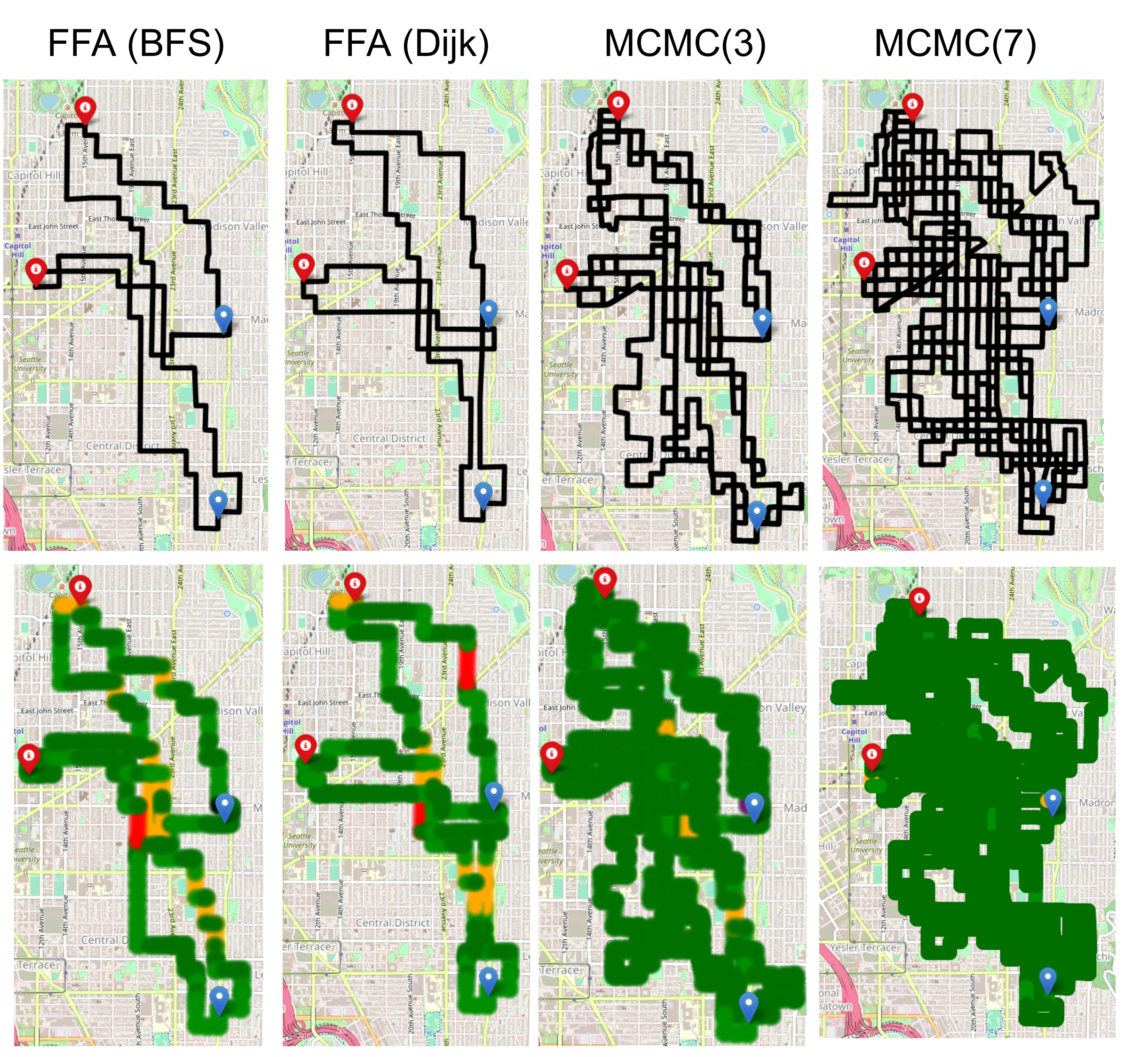}}
    \caption{ Maxflows (top) and pollution (bottom) generated for the different policies on the map of Seattle with two $s,t$-pairs. MCMC($k$) denotes routing using $k$ samples from \emph{Markov Chain}. \emph{MaxFlow-MCMC} with $7$ samples prevents severe pollution (red, orange) from happening in any area. Legend of colors is same as Figure \ref{fig:front-pic}.}
    \label{fig:heatmap1}
\end{figure}

\begin{table}[t]\centering
        \resizebox{\columnwidth}{!}{
        \begin{tabular}{llcrcrrrcr}\toprule
        
            \parbox[t]{2mm}{\multirow{2}{*}{\rotatebox[origin=c]{90}{City}}} &Traffic  &&Length  && \multicolumn{3}{c}{Pollution ($NOx_{nm}$ in mg)}\\
         \cmidrule{6-8}
         & Policy && avg (m) && avg & max  * $10^{-3}$ & total * $10^{-4}$\\
         \midrule     
     
     
        \parbox[t]{2mm}{\multirow{4}{*}{\rotatebox[origin=c]{90}{Berkeley}}}&FFA(BFS) && 8528 && 945 & 10.2 & 57.4\\
        &FFA(Dij) && 8001 && 718 & 8.5 & 54.3\\ 
        &MCMC(3) && 9457 $\pm$ 40 && 462 $\pm$ 4 & 6.7 $\pm$ 0.2 & 66.5 $\pm$ 0.4\\
        &MCMC(7) && 9456 $\pm$ 19 && 316 $\pm$ 2 & 5.7 $\pm$ 0.2 & 66.7 $\pm$ 0.2\\ 
        \midrule 
        \parbox[t]{2mm}{\multirow{4}{*}{\rotatebox[origin=c]{90}{Islington}}}&FFA(BFS) && 4723 && 2048 & 13.7 & 62.7\\
        &FFA(Dij) && 4930 && 2047 & 13.7 & 65.3\\
        &MCMC(3) && 4976 $\pm$ 10 && 1369 $\pm$ 5 & 8.0 $\pm$ 0.1 & 67.5 $\pm$ 0.2\\
        &MCMC(7) && 4945 $\pm$ 7 && 1150 $\pm$ 4 & 7.6 $\pm$ 0.0 & 67.0 $\pm$ 0.1 \\
    
        \midrule 
     
     
        \parbox[t]{2mm}{\multirow{4}{*}{\rotatebox[origin=c]{90}{Seattle}}}&FFA(BFS) && 3270 && 2402 & 18.2 & 57.6\\
        &FFA(Dij) && 2133 && 1999 & 10.0 &55.4\\
        &MCMC(3) && 3914 $\pm$ 27 && 856 $\pm$ 11 & 12.8 $\pm$ 0.9 & 77.3 $\pm$ 1.2\\
        &MCMC(7) && 4041 $\pm$ 24&& 512 $\pm$ 9 & 12.3 $\pm$ 0.8 & 83.6 $\pm$ 1.1\\
        
        \midrule 

        \parbox[t]{2mm}{\multirow{4}{*}{\rotatebox[origin=c]{90}{Kanpur}}}&FFA(BFS) && 4628 && 1152 & 8.8 &48.0\\
        &FFA(Dij) && 4248 &&  922 & 8.8 & 44.2\\
        &MCMC(3) && 4513 $\pm$ 4 && 597 $\pm$ 4 & 6.6 $\pm$ 0.1 & 50.6 $\pm$ 0.2\\
        &MCMC(7) && 4512 $\pm$ 4 && 444 $\pm$ 2 & 5.9 $\pm$ 0.1 & 50.7 $\pm$ 0.1\\
    
    
     
  
        \bottomrule
        \end{tabular}}
        \caption{Comparison of lengths and pollution generated for different traffic policies and cities simulated in \emph{SUMO}. Since our \emph{MaxFlow-MCMC} policy is randomized, we provide mean value and the standard error \cite{SEM} from 30 runs. Avg. pollution decreases significantly in all cases. Max pollution also reduces in most cases helping our objective of distributing the pollution better. There is some increase in distance travelled and hence the total pollution (summed up over all the links).}
          \label{tab:city-bench}
        \end{table}

\section{Results}
\label{sec:results}

We benchmark the performance of our \emph{Maxflow-MCMC} based traffic policy on large-scale maps of many cities along with baseline algorithms. We also compare it with previous work of \emph{k-PMFA} \cite{KAMISHETTY2020102194} on a small map due to its scalability issues. All the networks were obtained from \emph{OpenStreetMap} \cite{haklay2008openstreetmap}. 

\paragraph{Pollution reduction in City Scale Road Networks.}
    
We benchmarked \emph{MaxFlow-MCMC} on four larger real-world road networks, namely Seattle, Berkeley, Kanpur, and Islington, to demonstrate scalability (see Table \ref{tab:city-bench}). Table \ref{tab:map_details} provides details of the maps used. We used MaxFlow-MCMCLarge from Table \ref{tab:maxlfowmcmcproperties} for all the simulations. For Kanpur map, we set the value of $\lambda = 0.95$ since $\lambda = 0.99$ was giving us longer paths. We use two versions of \emph{FFA}, which use \emph{Breadth First Search (BFS)} and \emph{Dijkstra's} shortest path algorithm, respectively, to find the augmenting paths. The difference between the two versions of \emph{FFA} is that \emph{BFS} finds the shortest paths in terms of the number of edges and \emph{Dijkstra's} finds in terms of total path length. We also use two versions of \emph{MaxFlow-MCMC}, which samples $3$ and $7$ max flow solutions, respectively. 

In Table \ref{tab:city-bench}, the average pollution per link has decreased for all maps while there is some increase in the average distance traveled and total pollution summed up over all the edges. This is because the samples can be slightly longer than the shortest paths. However, the pollution is spread out because of the diverse samples, decreasing the max pollution in any link, which was our objective. It is also clear that increasing the number of max flow solutions sampled from $3$ to $7$ further prevents severe pollution. Figure \ref{fig:heatmap1} shows the NOx heatmap and the paths used by the algorithms for the multi-source multi-sink scenario of Seattle. The heatmap shows that while \emph{Maxflow-MCMC} distributes pollution over a much larger area, it also reduces the concentration of pollution in specific areas compared to \emph{FFA}. Plots for the other cities are provided in the Appendix.

\begin{table}[t]\centering
    \resizebox{\columnwidth}{!}{
        \begin{tabular}{llcrcrrrcr}\toprule
        
            \parbox[t]{2mm}{\multirow{2}{*}{\rotatebox[origin=c]{90}{City}}} &Traffic  &&Length  && \multicolumn{3}{c}{Pollution ($NOx_{nm}$ in mg)}\\
         \cmidrule{6-8}
         & Policy && avg (m) && avg & max & total\\
         \midrule     
     \parbox[t]{2mm}{\multirow{4}{*}{\rotatebox[origin=c]{90}{Small Map}}}&FFA(BFS) && 213 && 4387 & 9644 & 61428\\
     &FFA(Dij) && 198 && 4616 & 19884 & 73853 \\
     &k-PMFA(7) && 208 && 1853 & 11255 & 72262 \\
     &MCMC(7) && 207 && 1784  & 11776  & 69104\\
  
        \bottomrule
        \end{tabular}}
        \caption{Comparing the pollution values of \emph{$k$-PMFA} and \emph{MCMC} for the small map with $k =9$.}
          \label{tab:small-map}
        \end{table}
        
        
\paragraph{Comparison with k-PMFA.}


 \emph{k-PMFA} \cite{KAMISHETTY2020102194} is a pollution-aware routing algorithm that we compare within Table \ref{tab:small-map}. The table shows the performance of three algorithms, namely \emph{MaxFlow-MCMC}, \emph{k-PMFA}, and \emph{FFA}. As shown, the \emph{Maxflow-MCMC} gives comparable results while being scalable.

\begin{figure}[t]
    \centering
   \resizebox{0.8\columnwidth}{!}{
\includegraphics{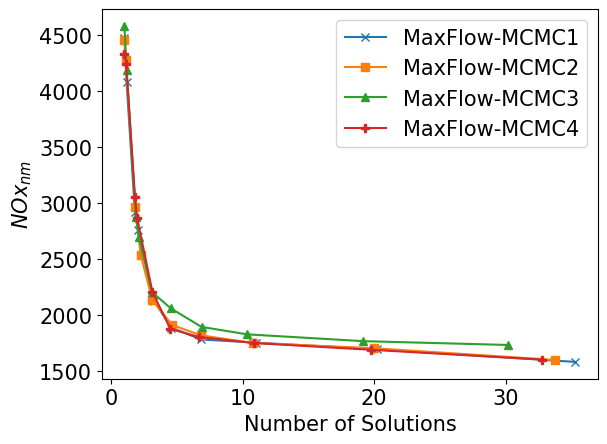}}
\caption{The change in normalised mean pollution reduces as we increase the number of solutions (obtained by increasing value of k). While there is significant reduction upto 10, the benefits are lesser later.}
 \label{meanpol}
\end{figure}

\begin{figure}[t]
    \centering
    \resizebox{0.8\columnwidth}{!}{

\includegraphics{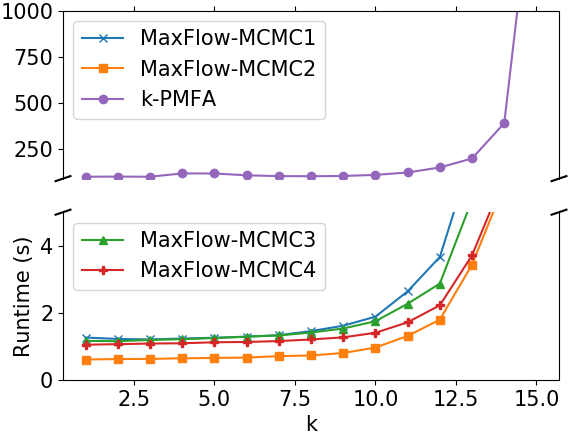}}
\caption{The runtime of \emph{k-PMFA} is two orders of magnitude higher than \emph{MCMC}, making it infeasible for road networks over a few sq km. Increasing the value of k increases the diversity while also increasing the running time, especially after $k=10$.}
\label{fig:runtime}

\end{figure}




\paragraph{Effect of hyperparameters.}


Figure~\ref{meanpol} studies the effect of using multiple max flow solutions on the pollution level. We also experiment with four different hyperparameter values of \emph{MaxFlow-MCMC} as given in Table \ref{tab:maxlfowmcmcproperties}, denoted as \emph{MaxFlow-MCMC} $1$-$4$.  It shows that increasing the number of solutions after $7$-$10$ does not significantly change the $NOx_{nm}$ value across the different sets of parameters tested. A key insight from this experiment is that for practical purposes, we can identify a set of, say $7$ solutions and use one solution per day of the week to route the vehicles and obtain a better distribution of pollution compared to the \emph{FFA} solution.

\paragraph{Runtime.}

Figure \ref{fig:runtime} shows the runtime comparison results between \emph{MaxFlow-MCMC} and \emph{k-PMFA}. Although \emph{MaxFlow-MCMC} provides approximately the same number of solutions as \emph{k-PMFA}, it is significantly faster. For example, for $k$ = $10$ the runtime needed for \emph{k-PMFA} is $110.07$ seconds while the slowest \emph{MCMC} version has a runtime of $1.89$ seconds. For values of $k$ from $1$ to $15$, the slowest version of \emph{MaxFlow-MCMC} (MaxFlow-MCMC1) showed a speedup of $65$x on average, and the fastest version (MaxFlow-MCMC2) showed a speedup of $130$x on average.

\section{Conclusion}
We design a \emph{Markov Chain} to sample integer max flow solutions for a planar graph. We used it to build an urban traffic routing policy that prevents the concentration of pollution. Since \emph{MCMC} algorithms are very efficient, we are able to scale our experiments to large cities. We believe that the sampling method for integer max flows can be of independent interest in solving other combinatorial optimization problems.
For future work, the policy can be expanded to non-planar graphs since real-world maps can contain flyovers and traffic between multiple sources and destinations.

\bibliography{aaai23}

\newpage

\section{Appendix}
\subsection{Proof of Stationary Distribution}

\begin{lemma}
\emph{A stationary distribution of $M_{\text{flow}}$ is
$$ \pi(x) = \frac{\lambda^{|x|}}{Z} \qquad \text{ where } \qquad Z = \sum_{y \in \Omega}\lambda^{|y|}$$
}
\end{lemma}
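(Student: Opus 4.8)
The plan is to verify that $\pi(x) = \lambda^{|x|}/Z$ satisfies the detailed balance equations $\pi(x) P(x,y) = \pi(y) P(y,x)$ for every pair of states $x, y \in \Omega$, since detailed balance implies $\pi$ is stationary. The only pairs that matter are those with $P(x,y) > 0$; for all other pairs both sides are zero. So first I would fix a pair $x \neq y$ with $P(x,y) > 0$. By the transition rule in Algorithm \ref{algo:mflow_step}, $y$ is obtained from $x$ by picking (with probability $\tfrac12$ for the bit $b$, then uniformly) a path $p \in \textit{paths}(x)$ and a face $f$ that shares an edge with $p$, rerouting to get $y = \textrm{reroute}(p,f)$, and accepting with probability $\min\{1, \lambda^{|y|}/\lambda^{|x|}\}$.

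The key structural observation is that the rerouting move is an involution on the relevant data: if rerouting path $p$ of $x$ through face $f$ yields $y$ (with new path $p'$), then rerouting $p'$ of $y$ through the same face $f$ yields back $x$. Hence the number of (path, face) choices leading from $x$ to $y$ equals the number leading from $y$ to $x$ — in the simplest case each is a single choice. Moreover $|F|$ and $|\textit{paths}(\cdot)| = mf$ are the same constants for both $x$ and $y$, so the proposal probabilities satisfy $q(x,y) = q(y,x)$, where $q(x,y) = \tfrac12 \cdot \tfrac{1}{|F|} \cdot \tfrac{1}{mf}$ (times the count of such choices). Then
\begin{align*}
\pi(x) P(x,y) &= \frac{\lambda^{|x|}}{Z}\, q(x,y) \min\left\{1, \frac{\lambda^{|y|}}{\lambda^{|x|}}\right\} = \frac{q(x,y)}{Z} \min\left\{\lambda^{|x|}, \lambda^{|y|}\right\},
\end{align*}
which is symmetric in $x$ and $y$, and hence equals $\pi(y) P(y,x)$. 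This is the standard Metropolis--Hastings computation; the point worth checking carefully is the symmetry of the proposal kernel $q$, i.e. that rerouting is reversible and does not change the normalizing counts — I would also note the capacity check is symmetric, since if rerouting $p$ through $f$ keeps $y$ feasible then the reverse move keeps $x$ feasible (indeed $x$ was feasible to begin with).

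Finally, summing detailed balance over $y$ gives $\sum_y \pi(x) P(x,y) = \sum_y \pi(y) P(y,x)$, i.e. $\pi(x) = \sum_y \pi(y) P(y,x)$ (using $\sum_y P(x,y) = 1$), so $\pi$ is a stationary distribution; $Z$ is finite because $\Omega$ is finite, so $\pi$ is a genuine probability distribution. The main obstacle, such as it is, is purely bookkeeping: carefully arguing that the proposal probabilities out of $x$ toward $y$ and out of $y$ toward $x$ coincide, including the self-loop contributions (rejections and the $b=0$ branch) which affect only $P(x,x)$ and are irrelevant to the $x \neq y$ balance equations. No rapid-mixing or spectral estimate is needed here — uniqueness and convergence are handled separately via ergodicity.
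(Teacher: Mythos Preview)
Your proposal is correct and follows essentially the same route as the paper: verify detailed balance for the Metropolis--Hastings chain, using that the proposal kernel (uniform face, uniform path, reroute) is symmetric so that $\pi(x)P(x,y)$ reduces to a quantity symmetric in $x,y$. The paper's version simply writes out $\pi(x)P(x,y)=\tfrac{\lambda^{|x|}}{Z}\cdot\tfrac{1}{F}\cdot\tfrac{1}{mf}\cdot\min\{1,\lambda^{|y|-|x|}\}$ and checks the two cases $\lambda^{|y|}\gtrless\lambda^{|x|}$ directly, whereas you compress this into the single expression $\tfrac{q(x,y)}{Z}\min\{\lambda^{|x|},\lambda^{|y|}\}$ and are more explicit about why $q$ is symmetric (the reroute involution, the capacity check, the $b=0$ branch); but the substance is identical.
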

\begin{proof}
We show that $\pi$ satisfies the detailed balance condition. That is, for the transition probabilities $P(x,y)$ (from $x$ to $y$), and  $ \forall x,y \in \Omega , \pi(x) \cdot P(x,y) =\pi(y) \cdot P(y,x) $.

For the case where $x$ and $y$ are states that differ by more than one face, $P(x,y)=0$ according to our transition rules, and therefore the detailed balance condition will be satisfied. Let us take two states, x, and y, which differ only by one face. Let $P(x,y)$ be the probability of transitioning from $x$ to $y$. Let $F$ be the total number of faces in the graph. Now,
$$
\pi(x) \cdot P(x,y) = \frac{\lambda^{|x|}}{Z} \cdot \frac{1}{F}\cdot \frac{1}{mf} \cdot \min\left\{1,\frac{\lambda^{|y|}}{\lambda^{|x|}}\right\}
$$

Similarly,
$$
\pi(y)\cdot P(y,x) = \frac{\lambda^{|y|}}{Z} \cdot \frac{1}{F}\cdot \frac{1}{mf} \cdot \min\left\{1,\frac{\lambda^{|x|}}{\lambda^{|y|}}\right\}
$$

If $\frac{\lambda^{|y|}}{\lambda^{|x|}} \geq 1$,
\begin{align*}
\pi(x) \cdot P(x,y) &=  \frac{\lambda^{|x|}}{Z} \cdot \frac{1}{F}\cdot \frac{1}{mf} \cdot 1 =  \frac{\lambda^{|y|}}{Z} \cdot \frac{1}{F}\cdot \frac{1}{mf} \cdot \frac{\lambda^{|x|}}{\lambda^{|y|}}\\ 
&= \pi(y) \cdot P(y,x)      
\end{align*}

If $\frac{\lambda^{|y|}}{\lambda^{|x|}} < 1$,
\begin{align*}
\pi(x) \cdot P(x,y) &=  \frac{\lambda^{|x|}}{Z} \cdot \frac{1}{F}\cdot \frac{1}{mf} \cdot \frac{\lambda^{|y|}}{\lambda^{|x|}} =  \frac{\lambda^{|y|}}{Z} \cdot \frac{1}{F}\cdot \frac{1}{mf} \cdot 1\\ 
&= \pi(y) \cdot P(y,x)      
\end{align*}
\end{proof}

\subsection{Diagrammatical explanation of proof of Proposition 3.3}

In Figure \ref{fig:diag_proof}, The first two diagrams represent the two top outer paths. The common nodes between the two are $s,1,4,7,t$, which gives us $4$ subpaths. For each of these subpaths, we convert the subpath in the lower one of the two to the other, resulting in both of these paths becoming the path shown in the third diagram. Next create the residual graph and repeat the steps until both the max flow solutions reach the same state. Since all our steps are reversible, we can reach any state from any other state in a finite number of steps
to the other.
\begin{figure}
    \centering
    \resizebox{\columnwidth}{!}{
    \includegraphics{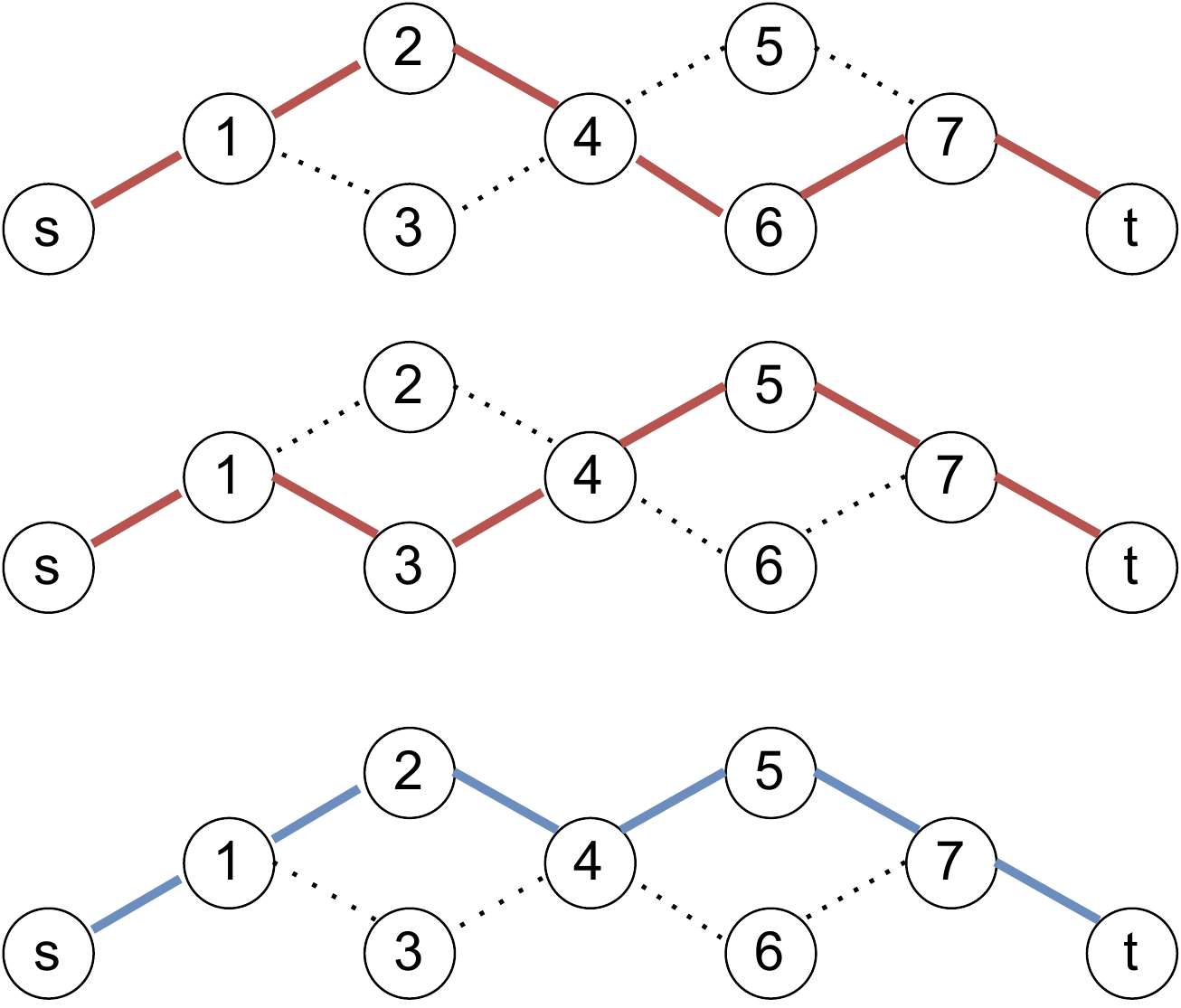}}
    \caption{Diagrammatical explanation of proof of Proposition 3.3}
    \label{fig:diag_proof}
\end{figure}
\subsection{Additional analysis with k-PMFA}

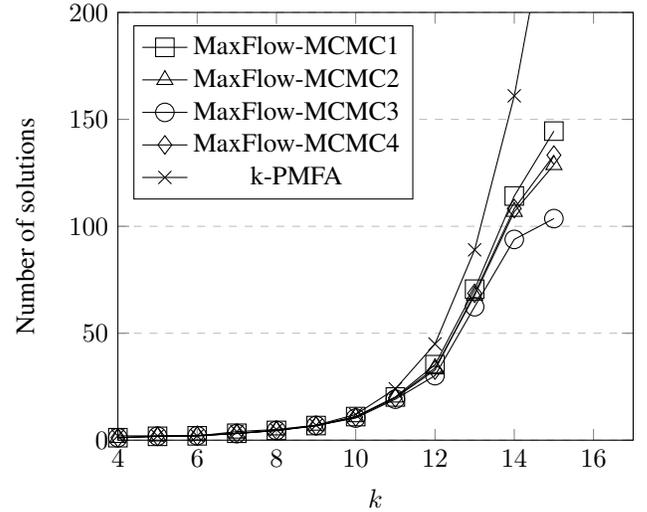
\begin{figure}[H]
    \centering
    \resizebox{\columnwidth}{!}{
\begin{tikzpicture}
\begin{axis}[
    xmin=4, xmax=17,
    ymin=0, ymax=200,
    legend pos=north west,
    ymajorgrids=true,
    grid style=dashed,
    xlabel={$k$},
    ylabel={Number of solutions}
]
\addplot[
    mark=square,mark options={scale=1.75}
    ]
    coordinates {
(1,1)
(2,1)
(3,1)
(4,1.166666667)
(5,1.8)
(6,2.033333333)
(7,3.233333333)
(8,4.566666667)
(9,6.833333333)
(10,11.03333333)
(11,20.2)
(12,35.3)
(13,70.6)
(14,114.1666667)
(15,144.5)    };

\addplot[
    mark=triangle,mark options={scale=1.75}
    ]
    coordinates {
(1,1)
(2,1)
(3,1)
(4,1.1)
(5,1.833333333)
(6,2.233333333)
(7,3.133333333)
(8,4.633333333)
(9,6.833333333)
(10,10.8)
(11,20)
(12,33.73333333)
(13,67.66666667)
(14,106.8333333)
(15,128.7)
    };

\addplot[
    mark=o,mark options={scale=1.75}
    ]
    coordinates {
(1,1)
(2,1)
(3,1)
(4,1.166666667)
(5,1.866666667)
(6,2.1)
(7,3.1)
(8,4.533333333)
(9,6.866666667)
(10,10.33333333)
(11,19.16666667)
(12,30.16666667)
(13,62.4)
(14,93.86666667)
(15,103.6)
    };
    
\addplot[
    mark=diamond,mark options={scale=1.75}
    ]
    coordinates {
(1,1)
(2,1)
(3,1)
(4,1.133333333)
(5,1.8)
(6,1.966666667)
(7,3.133333333)
(8,4.5)
(9,6.666666667)
(10,10.83333333)
(11,19.76666667)
(12,32.8)
(13,68.66666667)
(14,108.3)
(15,133.2666667)
    };
    
\addplot[
    mark=x,mark options={scale=1.75}
    ]
    coordinates {
    (1,1)(2,1)(3,1)(4,2)(5,2)(6,2)(7,4)(8,5)(9,7)(10,12)(11,24)(12,45)(13,89)(14,161)(15,261)};
    \legend{MaxFlow-MCMC1,MaxFlow-MCMC2,MaxFlow-MCMC3,MaxFlow-MCMC4,k-PMFA}

\end{axis}
\end{tikzpicture}}
\caption{k vs Number of solutions: x-axis represents the maximum number of common edges between two max flow solutions and y-axis represents the size of the k-optimal solution set. Parameters for the MaxFlow-MCMC$1-4$ are provided in Table 1 of the paper. At k=0, size of the solution set will be 1 since no common edges are allowed between two max flow solutions. Number of max flow solutions increase as we increase k. Plot also shows that the number of solutions provided by MaxFlow-MCMC is very close to that of k-PMFA when the number of solutions is small. Varying the parameters of MaxFlow-MCMC changes the size of the k-optimal set obtained}
\label{numsolvsk}

\end{figure}

\subsection{Heatmaps of Cities}

\begin{figure}[H]   \centering
    Berkeley\\
    \resizebox{0.75 \columnwidth}{!}{
    \includegraphics{./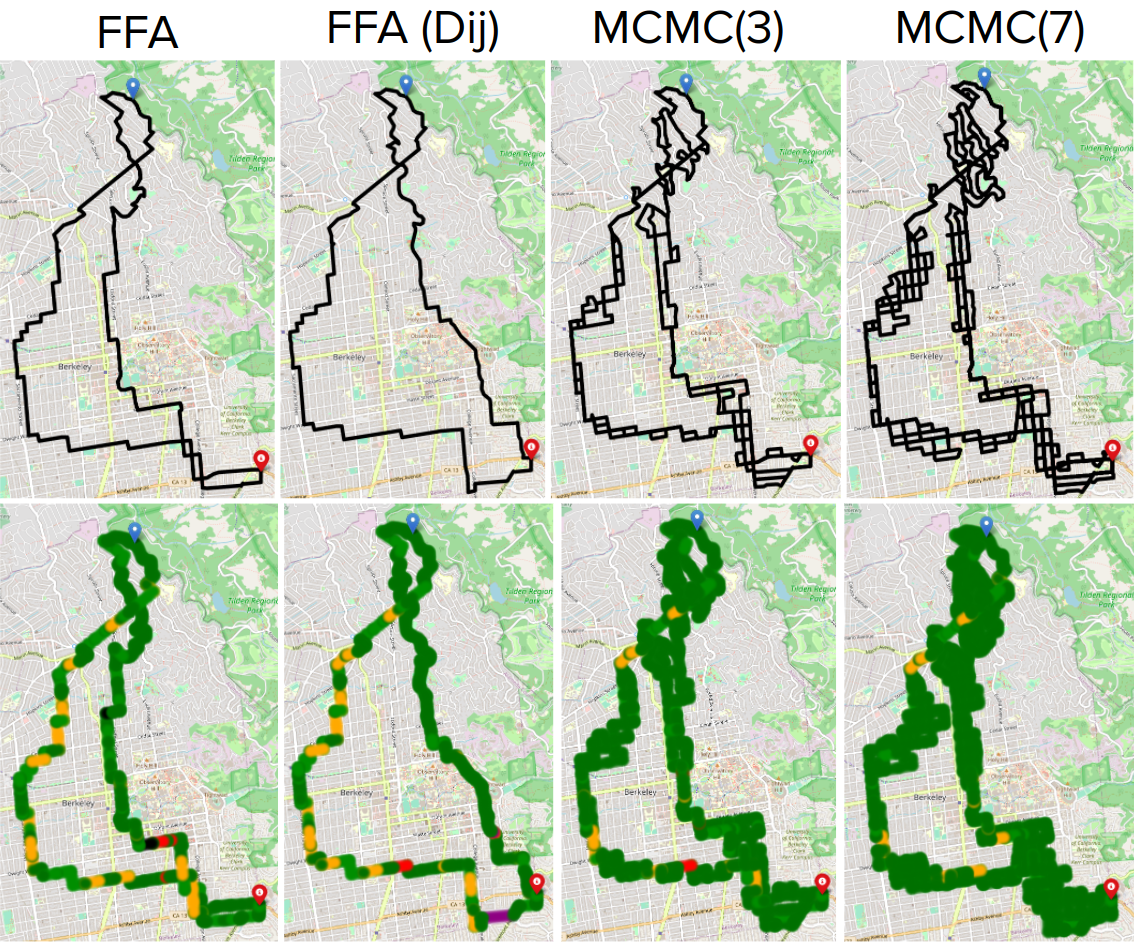}}\\
    Islington\\

    \resizebox{0.75 \columnwidth}{!}{
    \includegraphics{./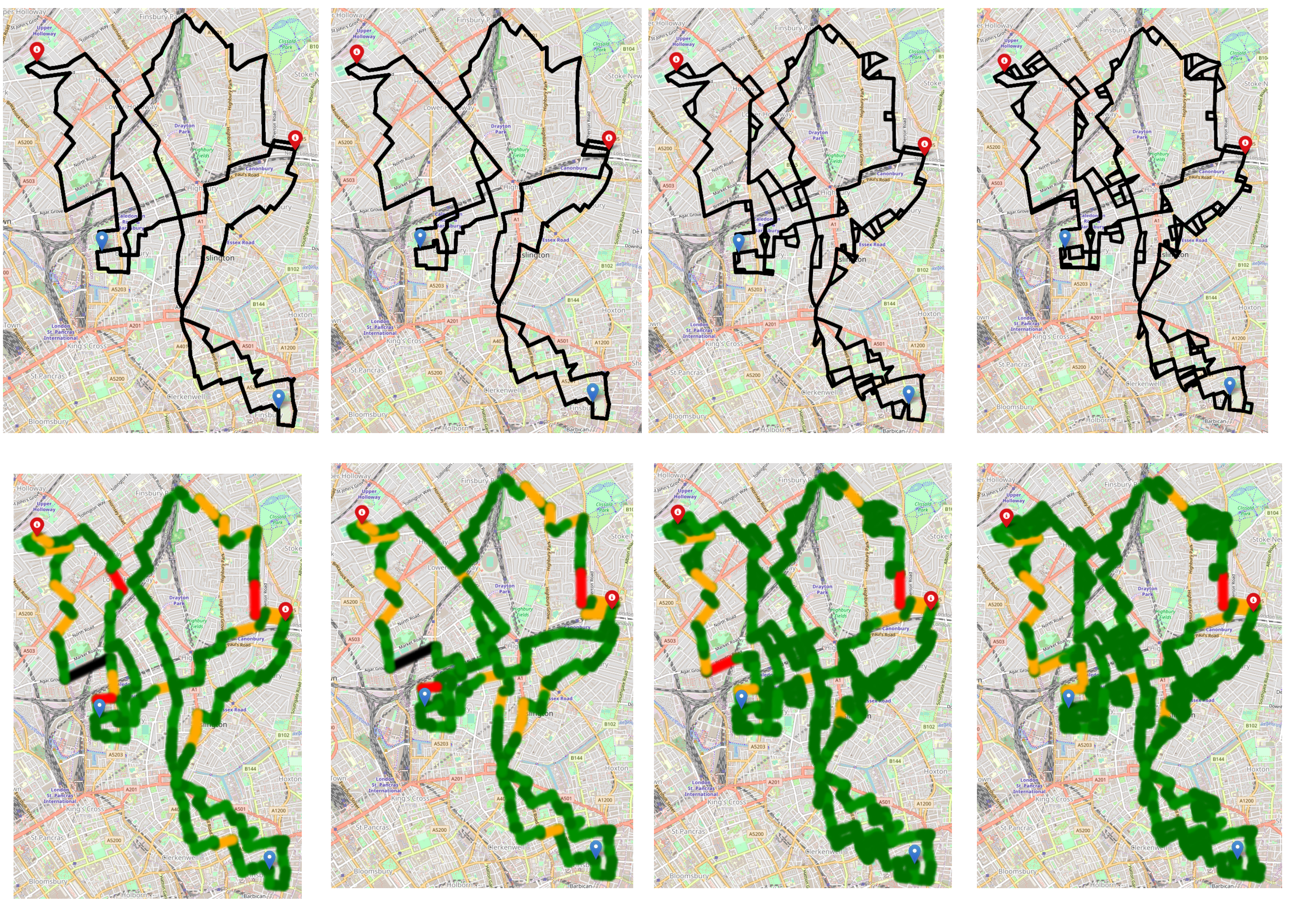}}\\
    Kanpur\\
\resizebox{0.75 \columnwidth}{!}{
    \includegraphics{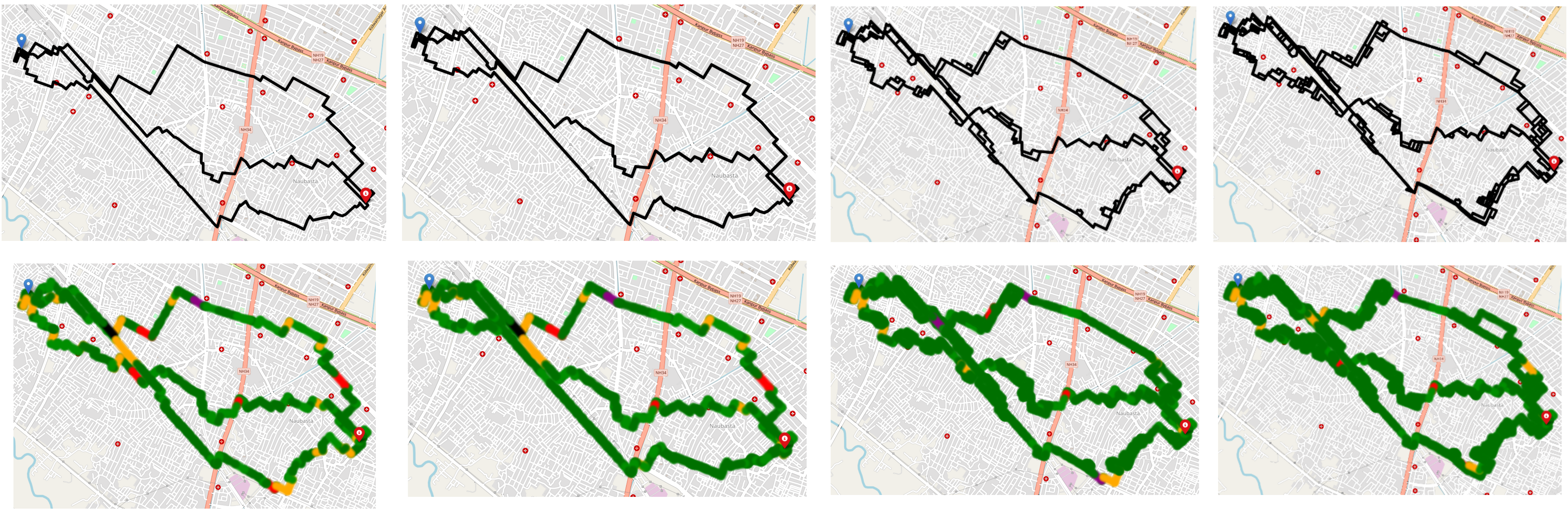}}
    
    \caption{Maxflows (top) and pollution (bottom) generated for the different policies on the maps of Berkeley (USA), Islington (UK) and Kanpur (India). The figures correspond to the results mentioned in Table 3 of the paper. MCMC($n$) denotes routing using $n$ samples from \emph{Markov Chain}. We can see that \emph{MaxFlow-MCMC} has reduced the areas severe pollution (purple,black) for all the maps in the figure. 
    Legend of colors is same as Figure 4 in the paper.}
    \label{fig:heatmap2}
\end{figure}

\end{document}